\documentclass[preprint]{article}

\usepackage{aistats2027}

\usepackage{amssymb}
\usepackage{amsfonts}
\usepackage{amsthm}
\usepackage{graphicx}
\usepackage[numbers,sort&compress]{natbib}

\usepackage[colorlinks=true, allcolors=blue]{hyperref}
\usepackage{xcolor}
\usepackage{algorithm}
\usepackage{algpseudocode}
\usepackage{booktabs}
\usepackage{multirow}
\usepackage{tikz}
\usetikzlibrary{arrows.meta,positioning}
\usepackage{array}

\newcommand{\E}{\mathbb{E}}
\newcommand{\N}{\mathcal{N}}

\newtheorem{proposition}{Proposition}
\newcommand{\numberthis}{\addtocounter{equation}{1}\tag{\theequation}}

\begin{document}

\twocolumn[

\aistatstitle{Spanning Diffusion and Flow-based Posterior Sampling
via Exact Gaussian Conditioning}

\aistatsauthor{Teo Deveney \And Lea Bogensperger \And Hok Shing Wong}

\aistatsaddress{University of Bath \And University of Zurich \And LMU Munich} ]

\begin{abstract}
\begin{sloppypar}
Flow-matching and diffusion models are increasingly used as priors for linear inverse problems, however, most methods are bespoke to one class of trajectory prior. Plug-and-play methods have been proposed, however these implicitly decorrelate the guided trajectory, resulting in broad priors that can struggle when data is sparse. This work proposes $\beta$-Flow: a general plug-and-play method indexed by a correlation parameter $\beta$ that generalises over flow, diffusion, and decorrelated priors, and a confidence parameter $\kappa$ that controls how strongly the path distribution is enforced. Each step requires only one forward evaluation of a denoising network, followed by a Gaussian conditioning step. This step is an exact Kalman update that is not dependent on network gradients. We show that many existing guided samplers can be interpreted as instances or approximations of our method, each implicitly using a different trajectory prior and operator in place of the Kalman gain. From this viewpoint, we provide a qualitative comparison of methods in terms of their implicit biases and assumptions. We demonstrate consistently state-of-the-art level performance across 7 inverse problems over 4 datasets in terms of PSNR and SSIM. Moreover, we demonstrate the utility of this prior control, by showing that confident high correlations (strict priors) work best when observations are less informative, while low correlation is preferred when observations are more informative, reflecting the complementary roles of the prior and likelihood in Bayesian inference. 

\end{sloppypar}
\end{abstract}

\section{Introduction}

\begin{figure*}[t]
\centering
\includegraphics[width=0.98\textwidth]{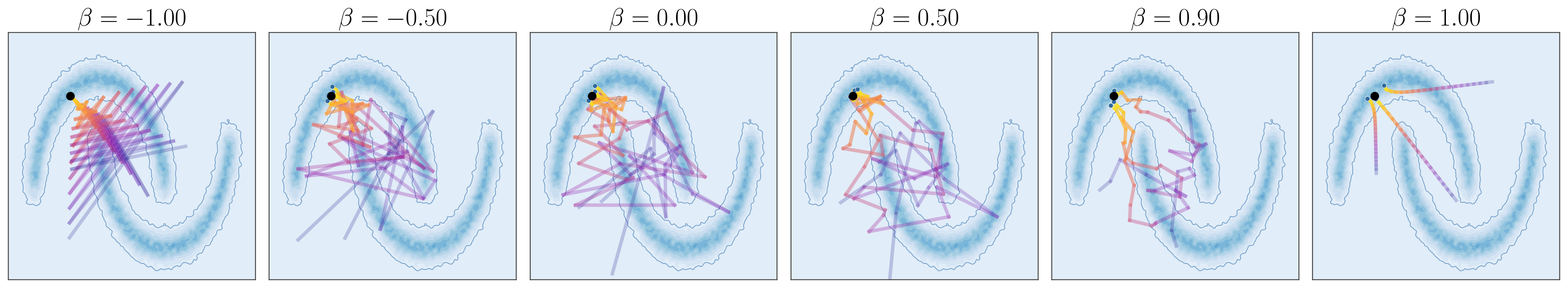}
\caption{Three guided trajectories on a toy denoising example for varying $\beta_t$. Each line is an independent trajectory, and the black marker is the observation. At $\beta_t=-1$ the trajectory is anti-correlated which results in the closest convergence to the observation, at $\beta_t=0$ every step re-noises independently, and as $\beta_t$ increases consecutive states become correlated, until at $\beta_t=1$ they follow a deterministic flow.}
\label{fig:beta-sweep}
\end{figure*}

Flow-matching \citep{lipman2023flow,liu2023rectified,albergo2023building} and diffusion models \citep{ho2020ddpm,song2021sde} generate data by transporting a simple noise distribution to the data distribution along a learned trajectory, deterministically or stochastically. This makes them natural priors for linear inverse problems. Given an observation $y$ of a clean sample $x_0$
\begin{align}
  \label{eq:observation}
  y = H(x_0) + \sigma_y\eta, \qquad \eta\sim\N(0,I),
\end{align}
the aim is to draw samples, or compute a point estimate from the posterior $p(x_0\mid y)$.

A large body of work approaches this problem by steering the model's reverse trajectory towards $y$ one step at a time. 
Most of these methods appear tailored to one model class, with likelihood score approximations common in diffusion models (DPS~\citep{chung2023dps}, $\Pi$GDM~\citep{song2023pigdm}), trajectory unrolling tied to flow-matching models (D-Flow~\citep{benhamu2024dflow}), and independent re-noising used in plug-and-play methods (PnP-Flow~\citep{martin2025pnpflow}, Flower~\citep{pourya2026flower}). Consequently, a scheme developed for one does not naturally carry over to the other without changing its prior behaviour. In this paper, we propose a general conditioning method for guided sampling that generalises over flow, diffusion and independent re-noising priors. 


\subsection{Preliminaries}
\label{sec:prelim}

Let $x_0\sim p_{\mathrm{data}}$ denote a data sample and $x_1\sim\N(0,I)$ independent noise. We consider the linear interpolant
\begin{align}
  \label{eq:interpolant}
  x_t = \alpha_t x_0 + \sigma_t x_1, \qquad \alpha_t = 1-t, \quad \sigma_t = t,
\end{align}
where $t\in [0,1]$ so that $t=0$ corresponds to the data, $t=1$ to pure noise.
Conditional on $x_0$, each $x_t$ is Gaussian with mean $\alpha_t x_0$ and covariance $\sigma_t^2 I$. A network trained on this interpolant, whether by flow-matching or as a diffusion denoiser, provides at every $x_t$ a one-step estimate $\hat x_0(x_t)\approx\E[x_0\mid x_t]$ of $x_0$.

Interpolant \eqref{eq:interpolant} defines a single deterministic path for each noise realisation $x_1$, but a guided sampler needs flexibility to move away from this path. We therefore model the current sampler state as a separate variable $\tilde x_{t_{i+1}}$
, whose correlation with the reference path is controlled by a time dependent parameter $\beta_t\in[-1,1]$,
\begin{align}
  \label{eq:augmented}
  \tilde x_{t_{i+1}} := \alpha_{t_{i+1}} x_0 + \sigma_{t_{i+1}}\big(\beta_{t_{i+1}} x_1 + \sqrt{1-\beta_{t_{i+1}}^2}\,\varepsilon\big),
\end{align}
where $\varepsilon \sim \N(0,I)$ is independent of $x_1$. This process has the same marginal as $x_{t_{i+1}}$ for every value of $\beta_{t_{i+1}}$.
For $\beta_{t_{i+1}}=1$ it fully coincides with $x_{t_{i+1}}$, for $\beta_{t_{i+1}}=0$ it is conditionally independent of it given $x_0$, and for $\beta_{t_{i+1}}<0$ they are negatively correlated. Together, the observation model \eqref{eq:observation} and interpolant \eqref{eq:interpolant} result in the graphical model of Figure~\ref{fig:dag}.

From here, it is natural to condition on everything that is known, i.e., $\tilde x_{t_{i+1}}$ and $y$. Each step the sampler must draw from $p(x_{t_i}\mid \tilde x_{t_{i+1}},y)$, which decomposes as
\begin{align*}
  \label{eq:step-target}
  &p(x_{t_i}\mid \tilde x_{t_{i+1}},y) \;=\; 
  \\ &\int \underbrace{p(x_0\mid \tilde x_{t_{i+1}},y)}_{\text{clean-signal posterior}}\;
  \underbrace{p(x_{t_i}\mid \tilde x_{t_{i+1}},x_0)}_{\text{transition kernel}} \,dx_0. \numberthis
\end{align*}
This decomposition admits a natural interpretation. The first factor is the posterior over the ground truth $x_0$ consistent with the data $y$ and current state $\tilde x_{t_{i+1}}$, and the second factor is the transition kernel, a stochastic interpolant \citep{albergo2023stochastic} between the endpoints $x_0$ and $\tilde x_{t_{i+1}}$ that transports this posterior along the prior trajectory. Every guided sampler approximates this integral, in practice by using a network estimate $\hat{x}_0 \approx \E[x_0\mid x_t]$ as a proxy for the unknown $x_0$.

\label{sec:coupled}
Standard Gaussian conditioning of \eqref{eq:augmented} (Appendix~\ref{app:derivation}, A.1) gives the transition kernel in closed form,
\begin{align}
  \label{eq:kernel}
  x_{t_i} \mid \tilde x_{t_{i+1}}, x_0 \;\sim\; \N\big(c\,x_0+\gamma\,\tilde x_{t_{i+1}},\ \tau^2I\big),
\end{align}
with coefficients $\gamma=\beta_{t_{i+1}}\sigma_{t_i}/\sigma_{t_{i+1}}$, $c=\alpha_{t_i}-\gamma\alpha_{t_{i+1}}$ and $\tau^2=\sigma_{t_i}^2(1-\beta_{t_{i+1}}^2)$. This gives the correct marginal of $x_{t_i}$ given $x_0$ for every $\beta_{t_{i+1}}$. Deterministic flow-matching updates and the standard diffusion posterior are both members of this family, differing only in their choice of $\beta_t$. Table~\ref{tab:unification} lists three choices of $\beta_t$ for which the kernel coincides with a known construction. We note that for $\beta_t\in[0,1]$ this derivation serves as an alternative construction of the DDIM family \citep{song2021ddim} (Appendix~\ref{app:derivation}, A.1). Negative values of $\beta_t$ lie outside the DDIM family but define an equally valid kernel.

\begin{table}[t]
\centering
\small
\setlength{\tabcolsep}{4pt}
\begin{tabular}{@{}l l >{\raggedright\arraybackslash}p{4.2cm}@{}}
\toprule
$\beta_{t_{i+1}}$ & $\tau^2$ & Coincides with \\
\midrule
$0$ & $\sigma_{t_i}^2$ &
  Independently noised $x_0$ \\[3pt]
$\beta^\star_{t_{i+1}}$ & $\sigma_{t_i}^2(1-\beta^{\star 2}_{t_{i+1}})$ &
  Reverse diffusion SDE \\[3pt]
$1$ & $0$ &
  Flow-matching ODE \\
\bottomrule
\end{tabular}

\caption{Common cases of the kernel \eqref{eq:kernel}. The diffusion schedule ($\beta^\star_{t_{i+1}} := \sigma_{t_i}\alpha_{t_{i+1}}/(\sigma_{t_{i+1}}\alpha_{t_i})$)
gives a Markovian forward process. Every schedule with $\beta_t\in[-1,1]$ defines a valid kernel with the correct marginals.}
\label{tab:unification}
\end{table}

\begin{figure}[H]
\centering
\begin{tikzpicture}[
  every node/.style={font=\small},
  state/.style={draw, circle, minimum size=9mm, inner sep=0pt},
  ghost/.style={draw, circle, dashed, minimum size=9mm, inner sep=0pt, gray, text=gray},
  obs/.style={draw=blue!55!black, circle, minimum size=9mm, inner sep=0pt, fill=blue!8, text=blue!55!black},
  >=Latex, thick
]
\node[align=center, font=\small\itshape] at (2.3,0.75) {true interpolant path};
\node[state] (x0) at (0,0)      {$x_0$};
\node[state] (xt) at (2.3,0)    {$x_{t_i}$};
\node[ghost] (xt1) at (4.6,0)   {$x_{t_{i+1}}$};
\draw[dotted, thick] (x0) -- (xt);
\draw[-{Latex[length=3mm]}, thick] (xt) -- (xt1) -- ++(1.4,0);

\node[state] (xtt) at (4.6,-1.9) {$\tilde x_{t_{i+1}}$};
\draw[->] (x0) -- (xtt);
\draw[dashed, gray] (xt1) -- (xtt) node[midway, right, black] {$\beta_{t_{i+1}}$};
\node[align=center, font=\small\itshape] at (4.6,-2.65) {current state};

\node[obs] (y) at (0,-1.9) {$y$};
\draw[->, blue!55!black] (x0) -- (y) node[midway, left, blue!55!black, font=\scriptsize] {$H,\sigma_y$};
\node[align=center, font=\small\itshape, text=blue!55!black] at (0,-2.65) {observation};
\end{tikzpicture}
\caption{Graphical model for one reverse step. The state $\tilde x_{t_{i+1}}$ held by the sampler depends on $x_0$ and is correlated, with coefficient $\beta_{t_{i+1}}$, with the reference point $x_{t_{i+1}}$ on the interpolant path (grey, dashed). It is conditionally independent of $x_{t_{i+1}}$ given $x_0$ when $\beta_{t_{i+1}}=0$ and coincides with it when $\beta_{t_{i+1}}=1$. Guided sampling proceeds from $x_1$ to $x_0$ by iteratively modelling the distribution of $x_{t_i}$ given $\tilde x_{t_{i+1}}$ and $y$.}
\label{fig:dag}
\end{figure}
\subsection{Existing approaches}
\label{sec:existing}

Three families of guidance cover most of the literature. Each can be seen as approximating \eqref{eq:step-target}, and Section~\ref{sec:others} writes each in the update form derived in this paper.

The first estimates the ground truth $\hat x_0(x_{t_{i+1}})$ directly, applies a data consistency-based update, and samples $x_{t_i}$ independently of $x_{t_{i+1}}$. PnP-Flow \citep{martin2025pnpflow} does this via a gradient step as the data consistency update while FLOWER \citep{pourya2026flower} takes a proximal step. In both cases the corrected point is returned to the trajectory by re-noising with independent noise.

The second family, commonly used in diffusion settings, approximates the conditional score function by adding the score of an approximated likelihood to the unconditional score and integrating. This type of approximation is derived via the gradient of the likelihood evaluated at $\hat x_0(x_{t})$. DPS \citep{chung2023dps} and $\Pi$GDM \citep{song2023pigdm} are examples of this type of approach. The update is therefore based on the network's own local sensitivity to the state, and its step size is set by hand. 

The third family, most common in flow-matching settings, replaces the single step estimator with an estimate derived through a roll-out of the ODE solver. This is exemplified by D-Flow \citep{benhamu2024dflow}, which optimises the noise $x_1$ that seeds a deterministic sampler, backpropagating a data-fidelity loss on the final output $x_0$ through the entire discretised trajectory. Here the observation enters once, at the boundary, and the update is applied to $x_1$ rather than to a state on the trajectory. Closely related variants also apply this update at intermediate points of the trajectory~\cite{denker2026trajectory}. 


\subsection{Contributions}
\label{sec:contrib}

This paper treats a guided step as a Gaussian conditioning based on the graphical model of Figure \ref{fig:dag}. Our contributions are as follows.
\begin{itemize}
  \item \emph{One sampler for flow, diffusion and decorrelated priors.} A single correlation parameter $\beta_t$ spans the flow-matching ODE, the reverse diffusion SDE and independent re-noising, so one method serves all three model classes and the trajectory prior becomes a choice rather than implicitly inherited.
  \item \emph{Closed-form, gradient-free guidance.} Each guided step is an exact Gaussian conditioning update, computed as a Kalman update from a single forward evaluation of the network, with no backpropagation through the network or the trajectory.
  \item \emph{A comparison of existing methods within a unified view.} DPS, $\Pi$GDM, PnP-Flow, FLOWER and D-Flow can all be written as a choice of transport and gain, which makes their implicit assumptions explicit and comparable. We compare methods qualitatively based on the transport and gain they apply. For PnP-Flow and DPS we prove that no fixed step size in their gradient step can reproduce the calibrated Kalman gain, as both over-weight the observation increasingly as the trajectory approaches the data.
  \item \emph{Empirical results.} We show state-of-the-art performance against existing solvers, and demonstrate that the optimal choice of trajectory prior differs based on the problem, showing that control over the trajectory prior is practically beneficial.
\end{itemize}

\section{Methodology}
\label{sec:method}
Recall equation \eqref{eq:step-target}, which expresses the conditional distribution of the next state $x_{t_i}$ given the current state $\tilde x_{t_{i+1}}$ and the observation $y$. We note that this can be further decomposed, being proportional to
\begin{align}
  \label{eq:step-target-decomp}
  \int \underbrace{p(x_{t_i}\mid \tilde x_{t_{i+1}},x_0)}_{\text{transition kernel}}\;
 \underbrace{\underbrace{p(x_0\mid \tilde x_{t_{i+1}})}_{\text{prior}}\;
  \underbrace{p(y\mid x_0)}_{\text{likelihood}}}_{\text{clean-signal posterior}}
  \,dx_0.
\end{align}
Our approach is to carry out the Bayesian inference at the level of the clean signal and then transport the result along the trajectory. The observation model gives a likelihood, a prior over the clean signal combines with it to give the clean-signal posterior, and the transition kernel of Section~\ref{sec:coupled} transports that posterior to the next state, as in \eqref{eq:step-target}. Section~\ref{sec:update} derives the resulting update for a Gaussian prior with a given mean and covariance. It is available in closed form for every $\beta_t$, costs one forward evaluation of the network per step, and involves no network Jacobian. Section~\ref{sec:prior} then specifies our choice of mean and covariance using Tweedie's formulas.

\subsection{Guidance as closed form Gaussian conditioning}
\label{sec:update}

We assume for now that the prior over the clean signal, given the state the sampler holds, is Gaussian with a given mean $m_{\mathrm{pri}}$ and covariance $S_{\mathrm{pri}}$,
\begin{align}
  \label{eq:x0prior}
  x_0 \mid \tilde x_{t_{i+1}} \;\sim\; \N(m_{\mathrm{pri}},\ S_{\mathrm{pri}}).
\end{align}
These may depend on the current state $\tilde x_{t_{i+1}}$ and timestep $t+1$. We specify our choice of these in Section~\ref{sec:prior}. The observation model \eqref{eq:observation} gives the likelihood $p(y\mid x_0)=\N(Hx_0,\sigma_y^2I)$.

\emph{Posterior over the clean signal.} A Gaussian prior combined with a linear-Gaussian likelihood gives a Gaussian posterior, $p(x_0\mid\tilde x_{t_{i+1}},y)=\N(m_{\mathrm{post}},S_{\mathrm{post}})$, with
\begin{align*}
  \label{eq:x0post}
  m_{\mathrm{post}} &= m_{\mathrm{pri}} + S_{\mathrm{pri}}H^\top(HS_{\mathrm{pri}}H^\top+\sigma_y^2I)^{-1}r, \numberthis{} \\
  S_{\mathrm{post}} &= S_{\mathrm{pri}} - S_{\mathrm{pri}}H^\top(HS_{\mathrm{pri}}H^\top+\sigma_y^2I)^{-1}HS_{\mathrm{pri}}.
\end{align*}
The posterior mean is the prior mean plus a linear correction applied to the residual $r = y-Hm_{\mathrm{pri}}$ which incorporates the data. The linear operator applied to the residual is the Kalman gain, which trades off the prior covariance against the observation noise $\sigma_y^2$. A confident prior (small $S_{\mathrm{pri}}$) leaves $m_{\mathrm{pri}}$ almost unchanged, and a diffuse prior lets the observation dominate. The posterior covariance is the prior covariance reduced by what the observation resolves.

\emph{Transport along the trajectory.} By \eqref{eq:step-target} the next state is obtained by transporting this posterior through the kernel \eqref{eq:kernel}. The kernel is linear in $x_0$ and its noise is independent of $x_0$ and of $y$, so, given $\tilde x_{t_{i+1}}$ and $y$,
\begin{align}
  \label{eq:full-update}
  x_{t_i}|\ \tilde x_{t_{i+1}},y \sim \N\big(c\,m_{\mathrm{post}}+\gamma\,\tilde x_{t_{i+1}}, c^2S_{\mathrm{post}}+\tau^2\big),
\end{align}
where $(c,\gamma,\tau^2)$ are the kernel coefficients \eqref{eq:kernel}. That is, the posterior mean over $x_0$ is transported along the same path as the underlying kernel \eqref{eq:interpolant}, and the posterior variance expands the variance of \eqref{eq:interpolant} additively. These formulas can also be derived in a more opaque manner by direct Gaussian conditioning over the jointly Gaussian system $(x_0,x_t,\tilde x_{t_{i+1}},y)$. The mean in \eqref{eq:full-update} is therefore $\E[x_{t_i}\mid\tilde x_{t_{i+1}},y]$, the minimum-mean-squared-error estimate of $x_{t_i}$ under the prior.

In order to make a transition we compute the posterior transition mean $\mu_{\mathrm{post}} = c\,m_{\mathrm{post}}+\gamma\,\tilde x_{t_{i+1}}$ and add i.i.d. noise with standard deviation $\tau$ as prescribed by the kernel, giving an update rule \eqref{eq:kernel}
\begin{align}
  \label{eq:step}
  x_{t_i} = \mu_{\mathrm{post}} + \tau\,\xi, \qquad \xi\sim\mathcal N(0,I).
\end{align}
We have omitted the contribution of the $c^2S_{\mathrm{post}}$ covariance term. The reason for this is simple; the posterior covariance $S_{\mathrm{post}}$ over $x_0$ is a consequence of our choice of prior over $x_0$ \eqref{eq:x0prior} and the observation operator $H$ only. It does not necessarily represent the natural geometry of the data distribution and can introduce excess noise and unnatural structure if propagated. 

To reduce variance we draw $M$ independent copies of $\xi$, all of which are propagated. This mirrors the approach taken in \citep{pourya2026flower}.
Averaging the resulting outputs approximates marginalisation over the
residual uncertainty in $x_{t_i}$ at each step, instead of committing to a
single realisation at each step in the trajectory. Algorithm~\ref{alg:step}
summarises a step of our algorithm.

\begin{algorithm}[h]
\caption{One reverse step}
\begin{algorithmic}[1]
\Require known state $\tilde x_{t_{i+1}}$, observation $y$, operator $H$,
  noise level $\sigma_y$, network $f_\theta$, correlation $\beta_{t_{i+1}}$, scale $S_{t_{i+1}}$, particles $M$
\State $\hat x_0 \gets f_\theta(\tilde x_{t_{i+1}}, t_{i+1})$ 
\State compute $\gamma,c,\tau^2$ \eqref{eq:kernel}
\State $m_{\mathrm{post}} \gets$ posterior mean of $x_0$ \eqref{eq:x0post} 
\State $x_{t_i} \gets c\,m_{\mathrm{post}}+\gamma\,\tilde x_{t_{i+1}} + \tau\,\xi$ with $\xi\sim\mathcal N(0,I)$ 
\end{algorithmic}
\label{alg:step}
\end{algorithm}

Figure~\ref{fig:toy-beta-comparison} in Appendix~\ref{app:toy2d}
illustrates this step on a two-dimensional problem, showing the evolution of prior
and posterior over $x_{t_i}$ at six points of a trajectory for three
for different choices of $\beta_t$.

\subsection{The prior over $x_0$}
\label{sec:prior}

The only intractable term in \eqref{eq:step-target-decomp} is the prior $p(x_0\mid\tilde x_{t_{i+1}})$. Our method deals with this by using a Gaussian approximation, as is implicit in many diffusion guidance methods \citep{peng2024optimal}, which we motivate here using Tweedie's moment estimates. Given $\tilde x_{t_{i+1}}$ we can produce a one-step estimate $\hat x_0=\hat x_0(\tilde x_{t_{i+1}})=\mathbb{E}[x_0|\tilde x_{t_{i+1}}]$ (e.g.\ using Tweedie's formula~\citep{efron2011tweedie}). We use this as the mean of the Gaussian prior $m_{\mathrm{pri}}=\hat x_0$ \eqref{eq:x0prior}. We base the covariance $S_{\mathrm{pri}}=S_{t_{i+1}}$ on Tweedie's second order formula for the covariance of $x_0$ given $\tilde x_{t_{i+1}}$,
\begin{align}
  \label{eq:tweedie2}
  \frac{\sigma_{t_{i+1}}^2}{\alpha_{t_{i+1}}^2}
  \Big(I + \sigma_{t_{i+1}}^2\,\nabla^2\log p(\tilde x_{t_{i+1}})\Big).
\end{align}
In particular we take the data independent component to give a schedule for our prior covariance, and combine this with a tunable confidence parameter $\kappa$, giving
\begin{align}
  \label{eq:S}
  S_{t_{i+1}} = \kappa\left(\frac{\sigma_{t_{i+1}}}{\alpha_{t_{i+1}}}\right)^2 I.
\end{align}
The removal of the Hessian term is primarily a pragmatic one, since this is impractical to compute outright, and Jacobian-vector product-based implementations are significantly slower. We remark however, that this curvature is primarily negative in high likelihood regions of interest, so the omission of this term amounts to enlarging the variance in directions of high negative curvature rather than restricting it to no longer cover the required ground truth.

\subsection{Computational cost and stability}
\label{sec:cheap}
Every quantity in \eqref{eq:full-update} is an explicit function of
$\hat x_0$ and $\tilde x_{t_{i+1}}$, and $\hat x_0$ is obtained from a
single forward evaluation of the network that is held fixed for the
remainder of the step. Computing $x_{t_i}$ therefore requires no further
evaluation of $f_\theta$ and no differentiation through
it. The network enters only through $\hat x_0$, and its uncertainty
enters only through the scalar $S_{t_{i+1}}$ in the gain. This is the same cost profile as proximal \citep{martin2025pnpflow} and data-consistency samplers \citep{pourya2026flower}, whereas score-based guidance \citep{chung2023dps,song2023pigdm} obtains its update from the network's gradients, and flow-based rollout methods require gradients over the full trajectory rollout \citep{benhamu2024dflow}. A summary of network evaluations and gradients required per update between methods is summarised in Table~\ref{tab:gains}.

In terms of stability, our approach is constructed at the level of the joint Gaussian system, and therefore is not subject to discretisation error inherent to methods framed around integrating an ODE/SDE. Furthermore, since our method is not guided by network gradients, it remains robust in regions of low density where the lower quantity of training data can result in unpredictable gradients.

\section{Method Comparison}
\label{sec:others}

The update \eqref{eq:full-update} has three distinct parts. A prior over the clean
signal centred at a point estimate $\hat x_0$, a data informed update of that
estimate with the residual $r:=y-H\hat x_0$, and transport of the updated
estimate back along the trajectory. Every method considered here has this
structure, and can be written
\begin{align}
  \label{eq:template}
  x_{\mathrm{new}} \;=\; \mathcal T\big(\hat x_0+\mathcal G\,r;\ x_{\mathrm{cur}}\big),
\end{align}
so that each method is specified by its gain $\mathcal G$ and its
transport $\mathcal T$. For our update, FLOWER, PnP-Flow, DPS and
$\Pi$GDM, $x_{\mathrm{cur}}=\tilde x_{t_{i+1}}$, $x_{\mathrm{new}}=x_{t_i}$,
and $\mathcal T$ is the kernel \eqref{eq:kernel} for some $\beta_t$,
$\mathcal T(z;\tilde x_{t_{i+1}})=c\,z+\gamma\,\tilde x_{t_{i+1}}+\tau\,\xi$.
PnP-Flow and FLOWER have this structure by construction, and
Appendix~\ref{app:derivation}, A.2, shows that the score-based guidance
schemes DPS and $\Pi$GDM follow it implicitly. ODE rollout methods that optimise the latent noise $x_1$ have a slightly different structure, setting $x_{\mathrm{cur}}=x_1^{(k)}$, $x_{\mathrm{new}}=x_1^{(k+1)}$ (where $k$ indexes optimisation iterations). Here the point estimate is defined as the endpoint of the integration trajectory $\hat x_0=F(x_1^{(k)})$, and transport of the updated estimate follows by the reversal of this trajectory $\mathcal T(z;x_1^{(k)})=F^{-1}(z)$. For all methods we derive the implicit gains $\mathcal G$, which specify exactly how each method updates its estimate of $x_0$. Table~\ref{tab:gains} summarises the results, here we briefly summarise.

Our update is the case $\mathcal G=G(S_{t_{i+1}})$, where
$G(S):=SH^\top(HSH^\top+\sigma_y^2I)^{-1}$ is the Kalman gain of
\eqref{eq:x0post}. It automatically balances the prior's uncertainty against the observation noise, resulting in a properly normalised gain which stays bounded as the prior uncertainty $S$ grows. The other methods differ in how, and whether, they supply this normalisation.

\subsection{FLOWER and PnP-Flow}
\label{sec:flower}

These methods re-noise with independent noise in each iteration, and therefore implicitly fix $\beta_t=0$. FLOWER computes the posterior mean of $x_0$ exactly so $\mathcal G=G(S_{t_{i+1}})$
with $S_{t_{i+1}}=\sigma_{t_{i+1}}^2/(\alpha_{t_{i+1}}^2+\sigma_{t_{i+1}}^2)$.
FLOWER can therefore be considered as an instance of our method that uses a different prior approximation.
PnP-Flow instead takes a gradient step on the data fidelity,
$\mathcal G=\rho H^\top$, with $\rho$ tuned per problem. 

\subsection{DPS and $\Pi$GDM}
\label{sec:dps}

DPS adds to the unconditional diffusion step ($\beta_t=\beta^\star_t$) the
gradient of the misfit at the network estimate, differentiated through
the network. With $J:=\partial\hat x_0/\partial x_{t_{i+1}}$ and a
hand-set step size $\zeta$, its gain is
$\mathcal G_{\mathrm{DPS}}=\lambda_i JH^\top$ with $\lambda_i:=2\zeta/c$.
For an exact denoiser, Tweedie's formula gives
$J=(\alpha_{t_{i+1}}/\sigma_{t_{i+1}}^2)\operatorname{Cov}(x_0\mid x_{t_{i+1}})$,
so DPS approximates the Kalman gain with the prior covariance replaced by a
multiple of the network's own posterior covariance, and the inverse
residual covariance $(HSH^\top+\sigma_y^2I)^{-1}$ replaced by the
scalar $\lambda_i$.
$\Pi$GDM replaces the point mass by an isotropic Gaussian with
covariance $r_t^2I$, giving a gain proportional to
$J^\top H^\top(r_t^2HH^\top+\sigma_y^2I)^{-1}$. It contains the
normalisation that DPS lacks, with $r_t^2I$ in place of $S$, but still
uses the network Jacobian where the Kalman gain uses $S$.

\subsection{D-Flow}
\label{sec:dflow}

D-Flow optimises the noise $x_1$ that seeds a deterministic flow-matching model,
$x_0=F(x_1)$, by gradient descent on $\|y-HF(x_1)\|^2$. With
$\mathcal J:=\partial F/\partial x_1$ at the current iterate and step
size $\eta$, one step on $x_1$ changes the point estimate to
$\hat x_0+2\eta\,\bar{\mathcal J}\mathcal J^\top H^\top r$, where
$\bar{\mathcal J}$ is the average of $\partial F/\partial x_1$ along the
straight line from the current iterate to the next
(Appendix~\ref{app:derivation}, A.2). To first order
$\bar{\mathcal J}=\mathcal J$, and the gain has the form proportional to $\mathcal J\mathcal J^\top H^\top$, and so the update can be considered a gradient step preconditioned by $\mathcal J\mathcal J^\top$, where $\mathcal J$ is the Jacobian of the rollout $F$.

\subsection{Comparison summary}
\label{sec:comparison}

\begin{table*}[t]
\centering
\small
\renewcommand{\arraystretch}{1.2}
\begin{tabular}{@{}lcclc@{}}
\toprule
Method & Transport $\mathcal T$ & Point estimate & Gain $\mathcal G$ on the residual & Gradients through $f_\theta$ \\
\midrule
This paper & kernel, $\beta_t \in [-1,1]$ & one step & $G\big(\kappa(\sigma_{t_{i+1}}/\alpha_{t_{i+1}})^2\big)$ & none \\
FLOWER & kernel, $\beta_t=0$ & one step & $G\big(\sigma_{t_{i+1}}^2/(\alpha_{t_{i+1}}^2+\sigma_{t_{i+1}}^2)\big)$ & none \\
PnP-Flow & kernel, $\beta_t=0$ & one step & $\rho\,H^\top$ & none \\
DPS & kernel, $\beta^\star_t$ & one step & $\lambda_i\,JH^\top$ & every step \\
$\Pi$GDM & kernel, $\beta^\star_t$ & one step & $\propto J^\top H^\top(r_t^2HH^\top+\sigma_y^2I)^{-1}$ & every step \\
D-Flow & flow, $F^{-1}$ & integrated, $F(x_1^{(k)})$ & $2\eta\,\bar{\mathcal J}\mathcal J^\top H^\top$ & whole trajectory \\
\bottomrule
\end{tabular}
\caption{Every method's update written in the form of
\eqref{eq:template}, with $r=y-H\hat x_0$ and
$G(S)=SH^\top(HSH^\top+\sigma_y^2I)^{-1}$. For D-Flow, $\bar{\mathcal J}$ is the average of
$\mathcal J$ along the step. Appendix~\ref{app:derivation}, A.2, derives the DPS
and D-Flow entries.}
\label{tab:gains}
\end{table*}

Written in this form, the methods differ along three axes. The first is the transport; 
PnP-Flow and FLOWER use the kernel at $\beta_t=0$ and DPS at
$\beta_t=\beta^\star_t$, while D-Flow transports along the
learned flow. The flow and diffusion kernels are based on continuous ODE/SDE dynamics, whilst the PnP-Flow and FLOWER are discontinuous and considerably broader. The second is the
gain; our update and FLOWER apply a Kalman gain built from an explicit
prior on $x_0$, whereas PnP-Flow, DPS and D-Flow apply a gradient or
Jacobian step whose scalar step size stands in for the residual
covariance and so has to be tuned per problem. For PnP-Flow and DPS, Appendix~\ref{app:theory} shows that the Kalman gain cannot be reproduced by choice of step size: as $t\to0$ the Kalman gain vanishes while theirs does not, so both increasingly over-weight the observation (Proposition~\ref{prop:gain}). The third is the cost of obtaining the gain, with DPS and $\Pi$GDM requiring a gradient through the network at every step and D-Flow requiring the derivative of the whole trajectory, whereas a gain
built from the prior needs none.

\section{Numerical Results}
\label{sec:results}
\begin{table*}[t]
\centering
\caption{AFHQ-Cat performance. Most results were using 100 steps, with the exception of some pre-existing PnP-Flow/Flower baselines which used $500^*$ and $200^\dag$ steps in certain tasks. }
\label{tab:afhq}
\resizebox{\textwidth}{!}{%
\begin{tabular}{l ccc ccc ccc ccc ccc}
\toprule
& \multicolumn{3}{c}{Denoising}
& \multicolumn{3}{c}{Deblurring}
& \multicolumn{3}{c}{Super-resolution}
& \multicolumn{3}{c}{Random inpainting}
& \multicolumn{3}{c}{Box inpainting} \\
\cmidrule(lr){2-4}\cmidrule(lr){5-7}\cmidrule(lr){8-10}\cmidrule(lr){11-13}\cmidrule(lr){14-16}
Method
  & PSNR  & SSIM  & LPIPS
  & PSNR  & SSIM  & LPIPS
  & PSNR  & SSIM  & LPIPS
  & PSNR  & SSIM  & LPIPS
  & PSNR  & SSIM  & LPIPS \\
\midrule
Degraded
  & 20.00 & 0.314 & 0.509
  & 23.94 & 0.517 & 0.444
  & 11.70 & 0.208 & 0.873
  & 13.36 & 0.223 & 1.081
  & 21.80 & 0.740 & 0.198 \\
\midrule
PnP-GS
  & \textbf{32.58} & \textbf{0.894} & \textbf{0.072}
  & 27.91 & 0.753 & 0.349
  & 24.15 & 0.632 & 0.362
  & 29.42 & 0.836 & 0.126
  & -- & -- & -- \\
DiffPIR
  & 30.58 & 0.835 & 0.189
  & 27.56 & 0.728 & 0.342
  & 23.65 & 0.624 & 0.402
  & 31.70 & 0.881 & 0.062
  & -- & -- & -- \\
OT-ODE
  & 30.03 & 0.815 & \underline{0.076}
  & 27.06 & 0.713 & \textbf{0.123}
  & 25.91 & 0.716 & \textbf{0.108}
  & 29.40 & 0.839 & 0.090
  & 24.62 & 0.875 & 0.085 \\
D-Flow
  & 26.13 & 0.574 & 0.175
  & 27.82 & 0.721 & \underline{0.164}
  & 24.64 & 0.601 & 0.190
  & 32.20 & 0.894 & \textbf{0.040}
  & 26.26 & 0.842 & \underline{0.077} \\
Flow-Priors
  & 29.41 & 0.763 & 0.153
  & 26.47 & 0.700 & 0.181
  & 23.51 & 0.570 & 0.272
  & 32.37 & 0.906 & 0.047
  & 26.20 & 0.818 & 0.118 \\
PnP-Flow
  & 31.43 & 0.864 & 0.168
  & $28.19^*$ & $0.766^*$ & $0.332^*$
  & $\underline{27.37}^*$ & $\underline{0.774}^*$ & $\underline{0.183}^*$
  & $\underline{33.75}^\dag$ & $\textbf{0.929}^\dag$ & $0.048^\dag$
  & 26.68 & 0.901 & 0.120 \\
Flower
  & 32.35 & \underline{0.891} & 0.116
  & \underline{28.97} & \underline{0.784} & 0.283
  & $26.57^*$ & $0.750^*$ & $0.282^*$
  & $33.70^\dag$ & $0.927^\dag$ & $\underline{0.045}^\dag$
  & \textbf{26.88} & \textbf{0.922} & \textbf{0.066} \\
\textbf{$\boldsymbol{\beta}$-Flow (ours)}
  & \underline{32.55} & \textbf{0.894} & 0.089
  & \textbf{29.18} & \textbf{0.790} & 0.290
  & \textbf{28.38} & \textbf{0.797} & 0.186
  & \textbf{33.88} & \underline{0.928} & 0.057
  & \underline{26.86} & \underline{0.915} & 0.083 \\
\bottomrule
\end{tabular}}
\end{table*}

\begin{table*}[t]
\centering
\caption{Results on 100 test images of the dataset CelebA.}
\label{tab:celeba}
\resizebox{\textwidth}{!}{%
\begin{tabular}{l ccc ccc ccc ccc ccc}
\toprule
& \multicolumn{3}{c}{Denoising}
& \multicolumn{3}{c}{Deblurring}
& \multicolumn{3}{c}{Super-resolution}
& \multicolumn{3}{c}{Random inpainting}
& \multicolumn{3}{c}{Box inpainting} \\
\cmidrule(lr){2-4}\cmidrule(lr){5-7}\cmidrule(lr){8-10}\cmidrule(lr){11-13}\cmidrule(lr){14-16}
Method
  & PSNR  & SSIM  & LPIPS
  & PSNR  & SSIM  & LPIPS
  & PSNR  & SSIM  & LPIPS
  & PSNR  & SSIM  & LPIPS
  & PSNR  & SSIM  & LPIPS \\
\midrule
Degraded
  & 20.00 & 0.348 & 0.372
  & 27.83 & 0.740 & 0.126
  & 10.26 & 0.183 & 0.827
  & 11.95 & 0.196 & 1.041
  & 22.27 & 0.742 & 0.214 \\
\midrule
PnP-GS
  & \underline{32.64} & 0.910 & \underline{0.035}
  & 34.03 & 0.924 & 0.041
  & 31.31 & 0.892 & 0.064
  & 29.22 & 0.875 & 0.070
  & -- & -- & -- \\
DiffPIR
  & 31.20 & 0.885 & 0.060
  & 32.77 & 0.912 & 0.060
  & 31.52 & 0.895 & 0.033
  & 31.74 & 0.917 & 0.025
  & -- & -- & -- \\
OT-ODE
  & 30.54 & 0.859 & \textbf{0.032}
  & 33.01 & 0.921 & \textbf{0.029}
  & 31.46 & 0.907 & \textbf{0.025}
  & 28.68 & 0.871 & 0.051
  & 29.40 & 0.920 & 0.038 \\
D-Flow
  & 26.04 & 0.607 & 0.092
  & 31.25 & 0.854 & 0.038
  & 30.47 & 0.843 & \underline{0.026}
  & 33.67 & 0.943 & \textbf{0.015}
  & 30.70 & 0.899 & \underline{0.026} \\
Flow-Priors
  & 29.34 & 0.768 & 0.134
  & 31.54 & 0.858 & 0.056
  & 28.35 & 0.713 & 0.102
  & 32.88 & 0.871 & \underline{0.019}
  & 30.07 & 0.858 & 0.048 \\
PnP-Flow
  & 32.30 & \underline{0.911} & 0.056
  & 34.80 & 0.940 & 0.047
  & 31.49 & 0.906 & 0.056
  & \underline{33.98} & \underline{0.953} & 0.022
  & 31.09 & 0.940 & 0.043 \\
Flower
  & \textbf{33.14} & \textbf{0.926} & 0.038
  & \textbf{35.67} & \textbf{0.954} & \underline{0.032}
  & \underline{33.09} & \underline{0.932} & 0.040
  & 33.95 & \underline{0.953} & 0.020
  & \textbf{31.87} & \textbf{0.952} & \textbf{0.023} \\
\textbf{$\boldsymbol{\beta}$-Flow (ours)}
  & \textbf{33.14} & \textbf{0.926} & \underline{0.035}
  & \underline{35.63} & \textbf{0.954} & 0.036
  & \textbf{33.87} & \textbf{0.943} & 0.029
  & \textbf{34.52} & \textbf{0.959} & 0.022
  & \underline{31.58} & \underline{0.949} & \textbf{0.023} \\
\bottomrule
\end{tabular}}
\end{table*}

We evaluate the method across various inverse problems, including $4\times$ super-resolution, denoising, random inpainting, Gaussian deblurring, box inpainting, MRI, and CT reconstruction using a trained flow-matching models as the denoiser, and tested on a full held-out set of $N{=}100$ test images per problem. For AFHQ-Cats and CelebA we use the experimental setup and datasets, models, and comparison methods as \cite{pourya2026flower}, which is the current state-of-the-art on these problems. For the undersampled MRI and sparse-view CT reconstruction experiments we train a new flow-matching model on the FastMRI knee dataset \cite{Zbontar2018fastMRIAO,knoll2020fastMRI} and re-use a trained flow-matching model from~\cite{denker2026stability} trained on the AAPM dataset for CT~\cite{mccollough2017low} and compare all methods discussed in Section \ref{sec:others}. 

Results reported for our method are all based on a 100 step discretisation, and hyperparameters were tuned on a per-case basis for the highest PSNR using the left-out validation set. Comparison methods also all used 100 steps with the exception of two pre-existing baselines which used $200^*$ and $500^\dag$ steps on the AFHQ-Cats dataset consistent with \cite{pourya2026flower} (indicated by the corresponding superscripts in the table). The final reported numbers report the performance on the test dataset, which is independent of the train and validation sets.  We report averaged PSNR, SSIM~\cite{wang2004image} and LPIPS~\cite{zhang2018unreasonable} over these test samples, highlighting the best method in \textbf{bold} and the second best \underline{underlined}.

\subsection{AFHQ-Cats and CelebA}
\label{sec:results-afhq}
Following the set-up of \cite{martin2025pnpflow,pourya2026flower}, we evaluate on five inverse problems using the AFHQ-Cats and CelebA datasets. Results are shown in Tables \ref{tab:afhq} and \ref{tab:celeba}. For AFHQ-Cats our method has the best PSNR in 3 of 5 experiments, and is second ($\leq0.03$dB from the best) in the remaining two. A similar story holds for SSIM, where our method is highest in 3 of 5 experiments and second in the remaining ones by $<1\%$. For LPIPS our method is less performant, though very much competitive with the other approaches. This is not surprising, since our construction marginalises uncertainty to produce MMSE estimates, which (correctly) smooth over the unidentifiable fine structure that LPIPS is sensitive to. This is a classic demonstration of the perception-distortion tradeoff \cite{blau2018perception}; our method excels in accuracy, and therefore must sacrifice some perceptual capability. The results for CelebA are similar, with our method being best in 3 of 5 cases in PSNR and 4 of 5 in SSIM and second by a $<1\%$ margin in the remaining cases. Again LPIPS favours different approaches for state-of-the-art performance, but our method still remains competitive.

\begin{figure*}[htbp]
\centering
\includegraphics[width=0.89\textwidth]{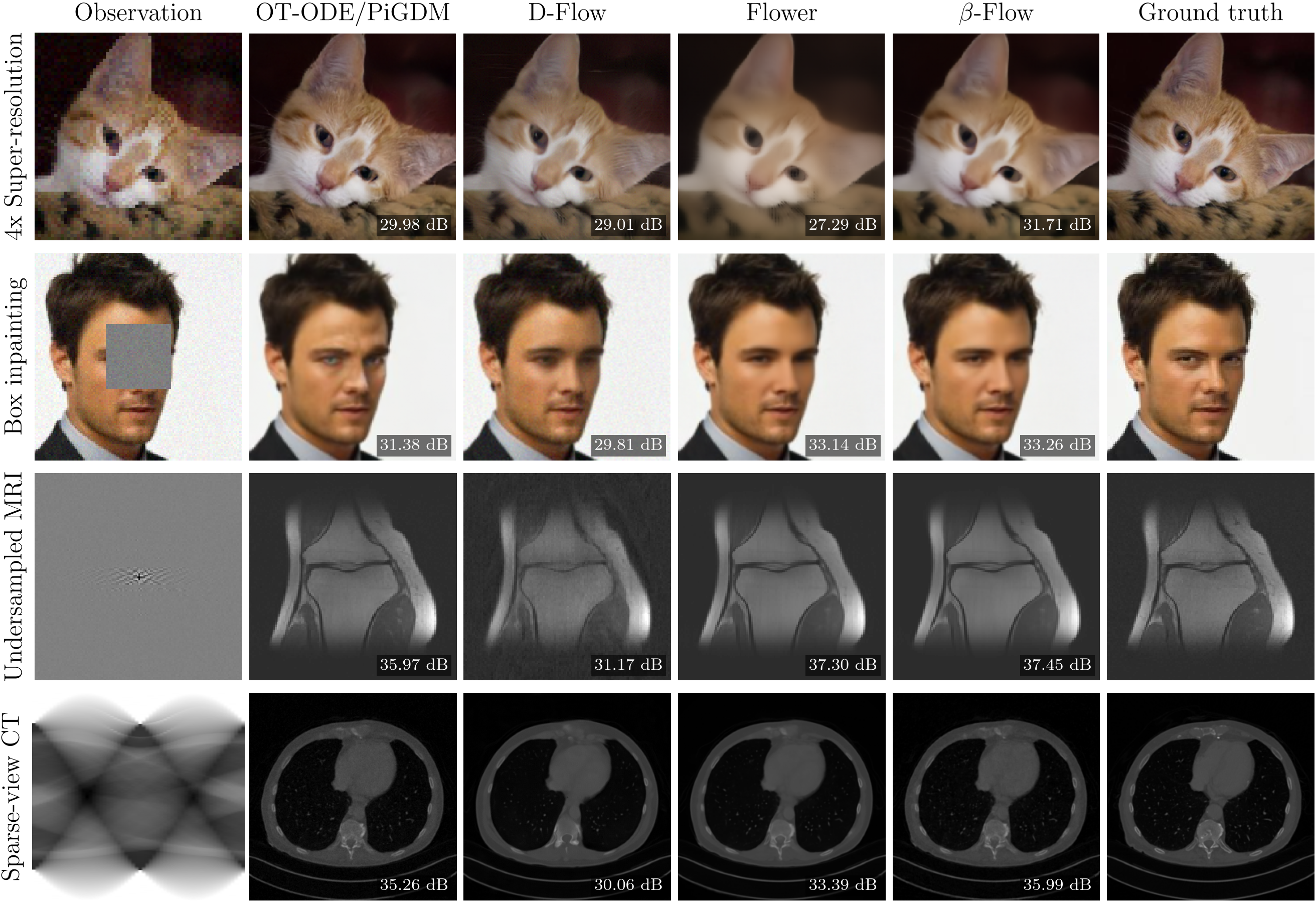}
\caption{Qualitative reconstructions across datasets and problems. All methods used 100 steps (100 updates in the case of D-Flow), and hyperparameters tuned for each dataset and task. Columns 2-4 represent reconstructions from each class of model. In Column 2 the likelihood score methods OT-ODE/$\Pi$GDM are applied to the upper/lower two rows respectively. These excel at producing realistic looking fine detail that does not necessarily match the ground truth. In Column 3 is the ODE rollout method D-Flow, which also produces some fine details, but appears sensitive to the degradation process as can be seen in the noise artifacts in row 2 and vertical smearing in row 3. Column 4 shows the independent-renoising method Flower, which achieves excellent accuracy when measurements are informative, but struggles at 100 steps with the sparser 4x super-resolution problem. In Column 5 $\beta$-Flow achieves excellent accuracy across all problems by adapting the prior to the task.}
\label{fig:all_problems_qualitative}
\end{figure*}

\subsection{Medical MRI/CT reconstruction}

We evaluate on medical image reconstruction problems, sparse-view CT and accelerated MRI, using for each problem a single pretrained general purpose flow-matching prior with no problem-specific retraining. For MRI, we use the FastMRI single-coil knee dataset  ($128\times128$, complex-valued, stored as 2 real channels).
The forward operator is an orthonormal 2D FFT followed by a variable-density random Cartesian mask retaining $20\%$ of phase-encode lines 
corresponding to an effective acceleration factor of $R=5$.
We evaluate on a held-out test split of 100 images.
For CT, we use the AAPM CT dataset ($256\times256$, single channel)~\cite{mccollough2017low}. The forward operator is a sparse-view Radon transform with 60 projection angles evenly spaced over $[0^\circ, 180^\circ)$, normalised to unit operator norm, with additive Gaussian noise of standard deviation $\sigma=0.01$ on the normalised sinogram. We evaluate on 50 test slices, evenly spaced across the AAPM test volume.

We compare our method, $\beta$-Flow, against five baselines using the same pretrained flow-matching prior. All method-specific hyperparameters were tuned on a held-out validation split, disjoint from the test images reported here. 
The quantitative results are shown in Table~\ref{tab:mri_ct_results}. $\beta$-Flow achieves state-of-the-art performance on both medical imaging tasks, demonstrating that our approach generalises well beyond natural images. 
On MRI reconstruction, Flower performs very closely to our method, with the two leading the comparison in PSNR, whereas on sparse-view CT, $\beta$-Flow yields clearly superior results in PSNR, which we attribute to the sparse-view CT problem being a more ill-conditioned problem. 
LPIPS performance is lower, consistent with our other experiments, which we attribute to the perception-distortion trade-off as discussed above~\cite{blau2018perception}. Our method also compares favourably in terms of runtime as shown in Table \ref{tab:mri_ct_runtime}. Fig.~\ref{fig:all_problems_qualitative} compares reconstructions for all datasets.

\begin{table}[t]
\centering
\footnotesize
\setlength{\tabcolsep}{2.2pt}
\caption{FastMRI and AAPM CT datasets results.}
\label{tab:mri_ct_results}
\begin{tabular}{l ccc ccc}
\toprule
& \multicolumn{3}{c}{Undersampled MRI}
& \multicolumn{3}{c}{Sparse-view CT} \\
\cmidrule(lr){2-4}\cmidrule(lr){5-7}
Method & PSNR & SSIM & LPIPS & PSNR & SSIM & LPIPS \\
\midrule
DPS                 & 36.19 & 0.926 & \textbf{0.037} & 28.79 & 0.740 & 0.107 \\
$\Pi$GDM            & 36.12 & 0.924 & \underline{0.039} & \underline{35.40} & 0.838 & 0.154 \\
D-Flow              & 34.19 & 0.871 & 0.158 & 27.42 & 0.740 & 0.157 \\
PnP-Flow            & 36.30 & 0.907 & 0.094 & 25.70 & 0.695 & 0.186 \\
Flower          & \underline{37.70} & \underline{0.942} & 0.091 & 33.41 & \textbf{0.905} & \textbf{0.076} \\
\textbf{$\boldsymbol{\beta}$-Flow (ours)} & \textbf{37.76} & \textbf{0.943} & 0.102 & \textbf{36.44} & \underline{0.898} & \underline{0.079} \\
\bottomrule
\end{tabular}
\end{table}


\subsection{$\beta_t$ and $\kappa$ super-resolution scale ablations}
Here we investigate the impact of prior hyperparameters $\kappa$ and $\beta$ , which both regularise the reconstruction in different ways. We apply super-resolution with the AFHQ-Cats dataset with increasingly severe downsampling scale and assess the impact of these prior hyperparameters on reconstruction quality. 

Intuitively, higher values of $\beta$ encourage trajectories to remain closer to flow paths, thus high $\beta$ values regularise by biasing reconstructions towards deterministic flows. This effect can be seen in Table \ref{tab:afhq_sr_beta} and Figure \ref{fig:afhq_sr_beta_qual}, where the optimal $\beta$ shows clear correlation with the super-resolution downsampling scale. Similarly, $\kappa$ regularises behaviour by controlling the relative importance of prior and observations at each step throughout the trajectory. A lower value of $\kappa$ implies a more confident prior, thus regularises behaviour by biasing it towards unconditional dynamics. This trend is clearly reflected in Table \ref{tab:afhq_sr_kappa} and is shown for images in Figure \ref{fig:afhq_sr_kappa_qual}, where the optimal $\kappa$ decreases monotonically as downsampling scale increases. Both effects are visualised on the 2d toy example in Figure \ref{fig:toy-beta-kappa}.

\begin{table}[H]
\centering
\small
\setlength{\tabcolsep}{1.9pt}
\begin{tabular}{@{}rcccccccc@{}}
\toprule
 & \multicolumn{2}{c}{$\times2$} & \multicolumn{2}{c}{$\times3$} & \multicolumn{2}{c}{$\times4$} & \multicolumn{2}{c}{$\times6$} \\
\cmidrule(lr){2-3}\cmidrule(lr){4-5}\cmidrule(lr){6-7}\cmidrule(lr){8-9}
$\beta_t$ & PSNR & SSIM & PSNR & SSIM & PSNR & SSIM & PSNR & SSIM \\
\midrule
$-0.6$ & $\mathbf{33.08}$ & $\mathbf{.909}$ & $29.87$ & $.838$ & $27.14$ & $.762$ & $22.00$ & $.609$ \\
$-0.3$ & $33.07$ & $.908$ & $29.93$ & $.839$ & $27.36$ & $.768$ & $22.40$ & $.620$ \\
$0.0$  & $33.04$ & $.907$ & $29.99$ & $\mathbf{.840}$ & $27.60$ & $.772$ & $22.84$ & $.632$ \\
$0.3$  & $32.96$ & $.904$ & $\mathbf{30.01}$ & $.839$ & $27.76$ & $.776$ & $23.40$ & $.645$ \\
$0.6$  & $32.76$ & $.896$ & $29.96$ & $.836$ & $\mathbf{27.85}$ & $\mathbf{.777}$ & $\mathbf{23.93}$ & $\mathbf{.657}$ \\
\bottomrule
\end{tabular}
\caption{PSNR (dB) and SSIM on AFHQ-Cat super-resolution across the
correlation $\beta_t$, with $\kappa=1$ fixed. The optimal $\beta_t$
increases with the downsampling factor, showing reconstructions favour more strongly correlated trajectories at higher scale factors.}
\label{tab:afhq_sr_beta}
\end{table}

\begin{table}[t]
\centering
\small
\setlength{\tabcolsep}{2.pt}
\begin{tabular}{@{}rcccccccc@{}}
\toprule
 & \multicolumn{2}{c}{$\times2$} & \multicolumn{2}{c}{$\times3$} & \multicolumn{2}{c}{$\times4$} & \multicolumn{2}{c}{$\times6$} \\
\cmidrule(lr){2-3}\cmidrule(lr){4-5}\cmidrule(lr){6-7}\cmidrule(lr){8-9}
$\kappa$ & PSNR & SSIM & PSNR & SSIM & PSNR & SSIM & PSNR & SSIM \\
\midrule
$0.1$  & $32.09$ & $.882$ & $30.15$ & $.845$ & $\mathbf{28.40}$ & $\mathbf{.798}$ & $\mathbf{25.57}$ & $\mathbf{.712}$ \\
$0.5$  & $33.00$ & $.907$ & $\mathbf{30.19}$ & $\mathbf{.846}$ & $28.22$ & $.790$ & $25.12$ & $.696$ \\
$1.0$  & $\mathbf{33.08}$ & $\mathbf{.909}$ & $30.01$ & $.839$ & $27.85$ & $.777$ & $23.93$ & $.657$ \\
$3.0$  & $32.99$ & $.904$ & $29.49$ & $.823$ & $26.47$ & $.735$ & $20.66$ & $.553$ \\
\bottomrule
\end{tabular}
\caption{PSNR (dB) and SSIM on AFHQ-Cat super-resolution for varying confidence scale $\kappa$. For each column $\beta_t$ is chosen to be the PSNR-optimal value from Table \ref{tab:afhq_sr_beta} (see Figure \ref{fig:afhq_sr_kappa_qual}). The optimal $\kappa$ decreases as the downsampling factor increases, favouring a more confident prior when observations are less informative.}
\label{tab:afhq_sr_kappa}
\end{table}

\section{Conclusion}

We have introduced a general plug-and-play technique for guided flow sampling for inverse problems that makes the prior behaviour itself an explicit choice. Our method allows an arbitrary denoising generative model to define trajectory priors ranging from deterministic flows to decorrelated, and even anti-correlated paths with controllable strength, while being simply interpreted as standard conditioning in a Gaussian graphical model. We illustrated the practical relevance of this control in super-resolution by demonstrating that harder tasks perform substantially better with straighter trajectories and stronger prior strength. Our method attains state-of-the-art performance in PSNR and SSIM in most tasks, and is $<1\%$ from the best performance in all tasks where it is not. Moreover, we demonstrated that many existing approaches can be interpreted as approximations to our approach, and presented a qualitative comparison across model classes that reveals the implicit assumptions contained in three common model classes.

\newpage



\subsubsection*{Acknowledgements}
This work was supported by the Maths4DL EPSRC Programme on the Mathematics of Deep Learning (EP/V026259/1), the UKRI Strength in Places Fund MyWorld Project (SIPF00006/1), and the Munich Center for Machine Learning (MCML).

\bibliographystyle{unsrtnat}
\bibliography{references}

\clearpage
\appendix
\thispagestyle{empty}

\onecolumn
\aistatstitle{Spanning Diffusion and Flow-based Posterior Sampling
via Exact Gaussian Conditioning: \\
Supplementary Materials}

\section{Derivations}
\label{app:derivation}

\paragraph{A.1 The kernel family.} Given $x_0$, the point
$x_{t_i}=\alpha_{t_i}x_0+\sigma_{t_i}x_1$ on the interpolant and the state
$\tilde x_{t_{i+1}}$ of \eqref{eq:augmented} are affine in the same noise
$x_1$, plus independent noise, so they are jointly Gaussian,
\[
  \begin{pmatrix} x_{t_i}\\ \tilde x_{t_{i+1}}\end{pmatrix}\Bigm|\, x_0
  \;\sim\; \N\!\left(
  \begin{pmatrix}\alpha_{t_i}x_0\\ \alpha_{t_{i+1}}x_0\end{pmatrix},\
  \begin{pmatrix}\sigma_{t_i}^2 I & \beta_{t_{i+1}}\sigma_{t_i}\sigma_{t_{i+1}} I\\
  \beta_{t_{i+1}}\sigma_{t_i}\sigma_{t_{i+1}} I & \sigma_{t_{i+1}}^2 I\end{pmatrix}
  \right),
\]
where the cross-covariance comes from the shared component $x_1$.
Further conditioning on $\tilde x_{t_{i+1}}$ gives
\begin{align}
  \label{eq:kernel-explicit}
  x_{t_i}\mid\tilde x_{t_{i+1}},x_0 \;\sim\;
  \N\Big(\alpha_{t_i}x_0+\beta_{t_{i+1}}\frac{\sigma_{t_i}}{\sigma_{t_{i+1}}}
  \big(\tilde x_{t_{i+1}}-\alpha_{t_{i+1}}x_0\big),\
  \sigma_{t_i}^2\big(1-\beta_{t_{i+1}}^2\big)I\Big),
\end{align}
which is \eqref{eq:kernel} with $\gamma=\beta_{t_{i+1}}\sigma_{t_i}/\sigma_{t_{i+1}}$,
$c=\alpha_{t_i}-\gamma\alpha_{t_{i+1}}$ and
$\tau^2=\sigma_{t_i}^2(1-\beta_{t_{i+1}}^2)$.

To recover the diffusion prior from this we note that DDPM is defined using the Markov forward step $x_{t_{i+1}}=a\,x_{t_i}+b\,\xi$, with
$\xi\sim\N(0,I)$ independent of $x_{t_i}$, $a=\alpha_{t_{i+1}}/\alpha_{t_i}$
and $b^2=\sigma_{t_{i+1}}^2-a^2\sigma_{t_i}^2$ so that the marginals are
preserved, has the reverse posterior
\[
  q(x_{t_i}\mid x_{t_{i+1}},x_0) = \N\Big(
  \frac{b^2\alpha_{t_i}}{\sigma_{t_{i+1}}^2}\,x_0
  +\frac{a\,\sigma_{t_i}^2}{\sigma_{t_{i+1}}^2}\,x_{t_{i+1}},\
  \frac{b^2\sigma_{t_i}^2}{\sigma_{t_{i+1}}^2}\,I\Big).
\]
Matching the coefficient of \eqref{eq:kernel-explicit}
gives $\beta_{t_{i+1}}=a\sigma_{t_i}/\sigma_{t_{i+1}}
=\sigma_{t_i}\alpha_{t_{i+1}}/(\sigma_{t_{i+1}}\alpha_{t_i})=\beta^\star_{t_{i+1}}$,
and at this value the coefficient of $x_0$ and the variance agree as well.

For the DDIM family we note that for marginals $\N(\alpha_tx_0,\sigma_t^2I)$, the DDIM
kernel with noise scale $s\in[0,\sigma_{t_i}]$ is
\[
  q_s(x_{t_i}\mid x_{t_{i+1}},x_0) = \N\Big(\alpha_{t_i}x_0
  +\sqrt{\sigma_{t_i}^2-s^2}\;\frac{x_{t_{i+1}}-\alpha_{t_{i+1}}x_0}{\sigma_{t_{i+1}}},\
  s^2I\Big).
\]
This is \eqref{eq:kernel-explicit} with
$\beta_{t_{i+1}}=\sqrt{1-s^2/\sigma_{t_i}^2}$, so $\beta_{t_{i+1}}\in[0,1]$
covers the DDIM family, and $s=0$ ($\beta_{t_{i+1}}=1$) is the deterministic
update used in flow-matching. 

Negative $\beta_{t_{i+1}}$ is not covered by DDIM, but nonetheless defines a valid kernel which introduces transitions that are actively negatively correlated. 

\paragraph{A.2 DPS and D-Flow in the form of \eqref{eq:template}.}
A DPS step proceeds by taking an unconditional reverse diffusion step and adding a gradient step based on the data misfit to the estimated ground truth. Taking $\beta_t=\beta^\star_t$, this can be written as
\begin{align}
  \label{eq:dps}
  \begin{split}
  x_{t_i} &= \underbrace{c\,\hat x_0(x_{t_{i+1}}) + \gamma\, x_{t_{i+1}} + \tau\,\xi}_{\text{Unconditional step}}
  \;-\; \zeta\,\nabla_{x_{t_{i+1}}}\|y-H\hat x_0(x_{t_{i+1}})\|^2\\
  &= c\big(\hat x_0+\lambda_iJ^\top H^\top r\big)+\gamma\,x_{t_{i+1}}+\tau\,\xi,
  \end{split}
\end{align}
where the second line uses
$\nabla_{x_{t_{i+1}}}\|y-H\hat x_0\|^2=-2J^\top H^\top r$ with
$J:=\partial\hat x_0/\partial x_{t_{i+1}}$, and $\lambda_i:=2\zeta/c$. This
is \eqref{eq:template} with the kernel at $\beta^\star_t$ and the gain
$\lambda_iJ^\top H^\top$.
For an exact denoiser $J=(\alpha_t/\sigma_t^2)\operatorname{Cov}(x_0\mid x_t)$, though a trained network
satisfies this only approximately. $\Pi$GDM's isotropic covariance
$r_t^2I$ changes the residual weighting to
$(r_t^2HH^\top+\sigma_y^2I)^{-1}$ but keeps $J$ in place of $S$.

\emph{D-Flow.} With $\hat x_0=F(x_1^{(k)})$ and
$\mathcal J:=\partial F/\partial x_1$ at $x_1^{(k)}$, a gradient step on
$\|y-HF(x_1)\|^2$ with step size $\eta$, and the update it induces on the
point estimate, are
\begin{align}
  \label{eq:dflow}
  \begin{split}
  x_1^{(k+1)} &= x_1^{(k)}+2\eta\,\mathcal J^\top H^\top r,\\
  F\big(x_1^{(k+1)}\big) &= \hat x_0+2\eta\,\bar{\mathcal J}\mathcal J^\top H^\top r,
  \end{split}
\end{align}
where $\bar{\mathcal J}:=\int_0^1\mathcal J(x_1^{(k)}+s\,\Delta x_1)\,ds$ is
the average of $\mathcal J$ along the step
$\Delta x_1:=x_1^{(k+1)}-x_1^{(k)}$. The second line is exact: the
fundamental theorem of calculus along the straight line between the
iterates gives $F(x_1^{(k+1)})-\hat x_0=\bar{\mathcal J}\,\Delta x_1$, the
identity behind Integrated Gradients \citep{sundararajan2017axiomatic}.
Applying $F^{-1}$ gives \eqref{eq:template} with $\mathcal T=F^{-1}$ and
the gain of Table~\ref{tab:gains}. To first order $\bar{\mathcal J}=\mathcal J$, and
$\mathcal J\mathcal J^\top$ is the covariance that a unit Gaussian
perturbation of $x_1$ induces on $F(x_1)$. Beyond first order
$\bar{\mathcal J}$ depends on $x_1^{(k+1)}$, and so on $r$, so the D-Flow
gain is not a fixed operator.

\section{Step-size gains versus the Kalman gain}
\label{app:theory}

Every method in Table~\ref{tab:gains} corrects the point estimate by
$\mathcal G r$, where $r=y-H\hat x_0$ is the residual. The step changes the
residual to
\begin{align}
  r'=y-H(\hat x_0+\mathcal G r)=(I-H\mathcal G)\,r,
\end{align}
so $H\mathcal G$ is the part of the residual the step removes. The residual
can also be written as $r = \sigma_y\eta + H(x-\hat x_0)$, thus it is a combination of the prediction error (projected into observation space), which is signal, and observation noise. The Kalman step removes a proportion of the residual that balances the uncertainty in the prior and observations based on Gaussian conditioning. A step that removes more of the residual over-weights the observations, either by treating the data as more certain, or the prior as less certain, leading to overfitting to the noise.

\paragraph{Setting.} Let $H$ be any linear operator, let
$C_t:=\operatorname{Cov}(x_0\mid x_t)$ be the posterior covariance at the
current time $t=t_{i+1}$, and assume $\|C_t\|\to0$ as $t\to0$. Write
$G(C)=CH^\top(HCH^\top+\sigma_y^2I)^{-1}$ for the Kalman gain built from a
covariance $C$. Ours is $G(S_{t_{i+1}})$ with $S_{t_{i+1}}$ of \eqref{eq:S},
and $G_t:=G(C_t)$ is the Kalman gain with the true posterior covariance.
PnP-Flow applies $g_t=\rho H^\top$, and DPS applies
$g_t=\lambda_iJ_t^\top H^\top$ with $\lambda_i=2\zeta/c$, where
$J_t:=\partial\hat x_0/\partial x_t$ is the Jacobian of the network. The following proposition shows that without tuned step size schedules, PnP-Flow and DPS cannot reproduce the calibrated step of the Kalman gain.

\begin{proposition}
\label{prop:gain}
Let $r$ be a residual with $H^\top r\neq0$, and let $t\to0$.
\begin{enumerate}
  \item[(i)] \emph{Kalman gains, including ours.} $HG_t\to0$ and
  $HG(S_{t_{i+1}})\to0$, for every $\kappa>0$.
  \item[(ii)] \emph{PnP-Flow.} $Hg_t=\rho HH^\top$ does not depend on $t$, so
  the step removes the share $\rho HH^\top r$ of the residual at every
  step.
  \item[(iii)] \emph{DPS.} Assuming $J_t\to I$ as $t\to 0$ (the denoiser output converges to its input in the limit of no noise) $\|Hg_tr\|\ge2\zeta\|HH^\top r\|\,(1+o(1))$, so the residual removed does not vanish.
\end{enumerate}
PnP-Flow and DPS therefore overweight the observations by continuing to reduce the residual as $t\to0$, where the true conditional mean recognises models the remaining residual as noise. In particular, no fixed step size
reproduces the Kalman gain along the trajectory, and these methods require scheduling as a result.
\end{proposition}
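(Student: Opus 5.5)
The proof is three separate limit computations, one per part. Parts (i) and (ii) are essentially immediate, and the substance is the lower bound in (iii).

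\emph{Part (i).} Write
\[
HG(C)=HCH^\top(HCH^\top+\sigma_y^2I)^{-1}
\]
and bound it in operator norm. Since $HCH^\top$ is positive semidefinite, every eigenvalue of $HCH^\top+\sigma_y^2I$ is at least $\sigma_y^2$, so $\|(HCH^\top+\sigma_y^2I)^{-1}\|\le\sigma_y^{-2}$. This gives
\[
\|HG(C)\|\le\|H\|^2\|C\|/\sigma_y^2 .
\]
For $C=C_t$ the right-hand side tends to zero by the standing assumption $\|C_t\|\to0$. For our covariance, \eqref{eq:S} gives $\|S_{t_{i+1}}\|=\kappa\,\sigma_{t_{i+1}}^2/\alpha_{t_{i+1}}^2=\kappa t^2/(1-t)^2$. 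This tends to zero as $t\to0$ for every fixed $\kappa>0$, so the same bound applies. Throughout I assume $\sigma_y>0$; this is needed, since for $\sigma_y=0$ the gain $HG$ is a projection and does not vanish.

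\emph{Part (ii).} This is a direct substitution: $Hg_t=\rho HH^\top$ has no $t$-dependence. The residual removed is $\rho HH^\top r$, which is nonzero whenever $\rho>0$ and $H^\top r\ne0$. The latter gives $HH^\top r\ne0$, because $\langle r,HH^\top r\rangle=\|H^\top r\|^2>0$. So the removed share stays bounded away from zero, in contrast with (i).

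\emph{Part (iii).} Write $Hg_t r=\lambda_i HJ_t^\top H^\top r$, with $\lambda_i=2\zeta/c$.

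First, I control $c$ as $t\to0$ using the kernel coefficients of \eqref{eq:kernel}: $c=\alpha_{t_i}-\gamma\alpha_{t_{i+1}}$ with $\gamma=\beta^\star_{t_{i+1}}\sigma_{t_i}/\sigma_{t_{i+1}}$. Since $0\le\sigma_{t_i}<\sigma_{t_{i+1}}$ and $\beta^\star\in[0,1]$, we have $\gamma\in[0,1)$, hence $0<c\le\alpha_{t_i}\le1$ (since $\alpha_{t_i}=1-t_i\in(0,1]$ on the relevant range). Therefore $\lambda_i\ge2\zeta$.

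Second, I use $J_t\to I$ to write $J_t^\top=I+E_t$ with $\|E_t\|\to0$. Then
\[
\|HJ_t^\top H^\top r\|\ge\|HH^\top r\|-\|H\|^2\|E_t\|\|r\|=\|HH^\top r\|\,(1+o(1)),
\]
where the last equality uses $HH^\top r\ne0$ (shown in part (ii)) to absorb the error term into the relative $o(1)$. Combining the two steps gives $\|Hg_tr\|\ge2\zeta\|HH^\top r\|(1+o(1))$, which does not vanish.

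\emph{Concluding claim.} A fixed $\rho$ or $\zeta$ gives $Hg_t$ bounded away from zero, while $HG_t\to0$ by (i). Hence $g_t\neq G_t$ for all small $t$, so no fixed step size reproduces the Kalman gain along the trajectory.

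\emph{Main obstacle.} The delicate point is the sign and size of $c$ in (iii). One must confirm that $c\le1$, rather than $c\to0$ in a way that would cancel the bound. If the DDPM coefficient $b^2\alpha_{t_i}/\sigma_{t_{i+1}}^2$ behaves differently at the discrete endpoint $t_i=0$, I would instead compute $c$ there explicitly: $\sigma_{t_i}=0$ gives $\gamma=0$ and $c=1$.
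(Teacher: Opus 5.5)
Your proposal is correct and follows the paper's proof essentially line by line. You use the same operator-norm bound $\|HG(C)\|\le\|H\|^2\|C\|/\sigma_y^2$ for (i), the fact $HH^\top r\neq0$ for (ii), and $c\le1\Rightarrow\lambda_i\ge2\zeta$ together with $J_t\to I$ for (iii); your explicit check that $c>0$, which the paper's step $\lambda_i\ge2\zeta$ tacitly needs, is a small improvement, but your closing remark has the worry backwards, since $c\to0$ would only enlarge $\lambda_i$ and strengthen the bound, whereas the real danger would be $c<0$ or $c>1$.
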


\begin{proof}
\leavevmode
\begin{enumerate}
  \item[(i)] With $A:=\sigma_y^{-2}HCH^\top\succeq0$,
  $HG(C)=A(I+A)^{-1}$, so $\|HG(C)\|\le\|A\|\le\|H\|^2\|C\|/\sigma_y^2$, which
  tends to zero for $C=C_t$ and for $C=S_{t_{i+1}}=\kappa(\sigma_t^2/\alpha_t^2)I$.
  \item[(ii)] $HH^\top r\neq0$ since $H^\top r\neq0$.
  \item[(iii)] $c=\alpha_{t_i}-\gamma\alpha_{t_{i+1}}\le1$ for the diffusion
  kernel ($\gamma\ge0$), so $\lambda_i\ge2\zeta$, and
  $HJ_t^\top H^\top r\to HH^\top r$ when $J_t\to I$.\qedhere
\end{enumerate}
\end{proof}

\section{Two-dimensional illustration}
\label{app:toy2d}
In this appendix we provide some further 2-dimensional visualisations in order to further inspect the mechanisms of the approach, which is difficult to see directly for the high-dimensional image problems of Section~\ref{sec:results}. We apply our construction to a two-moons denoising problem ($H=I$), so that the priors and posterior at each step are two-dimensional and can be plotted directly. Figure~\ref{fig:toy-beta-kappa} compares the impact of the correlation and confidence parameters $\beta,\kappa$ on the trajectories generated, and Figure~\ref{fig:toy-beta-comparison} visualises the prior and posterior distributions that are modelled during a single transition at various stages of sampling. Both figures share the same trained flow-matching network, the same true signal $x_0$ and observation $y$.

\begin{figure}[H]
\centering
\includegraphics[width=\textwidth]{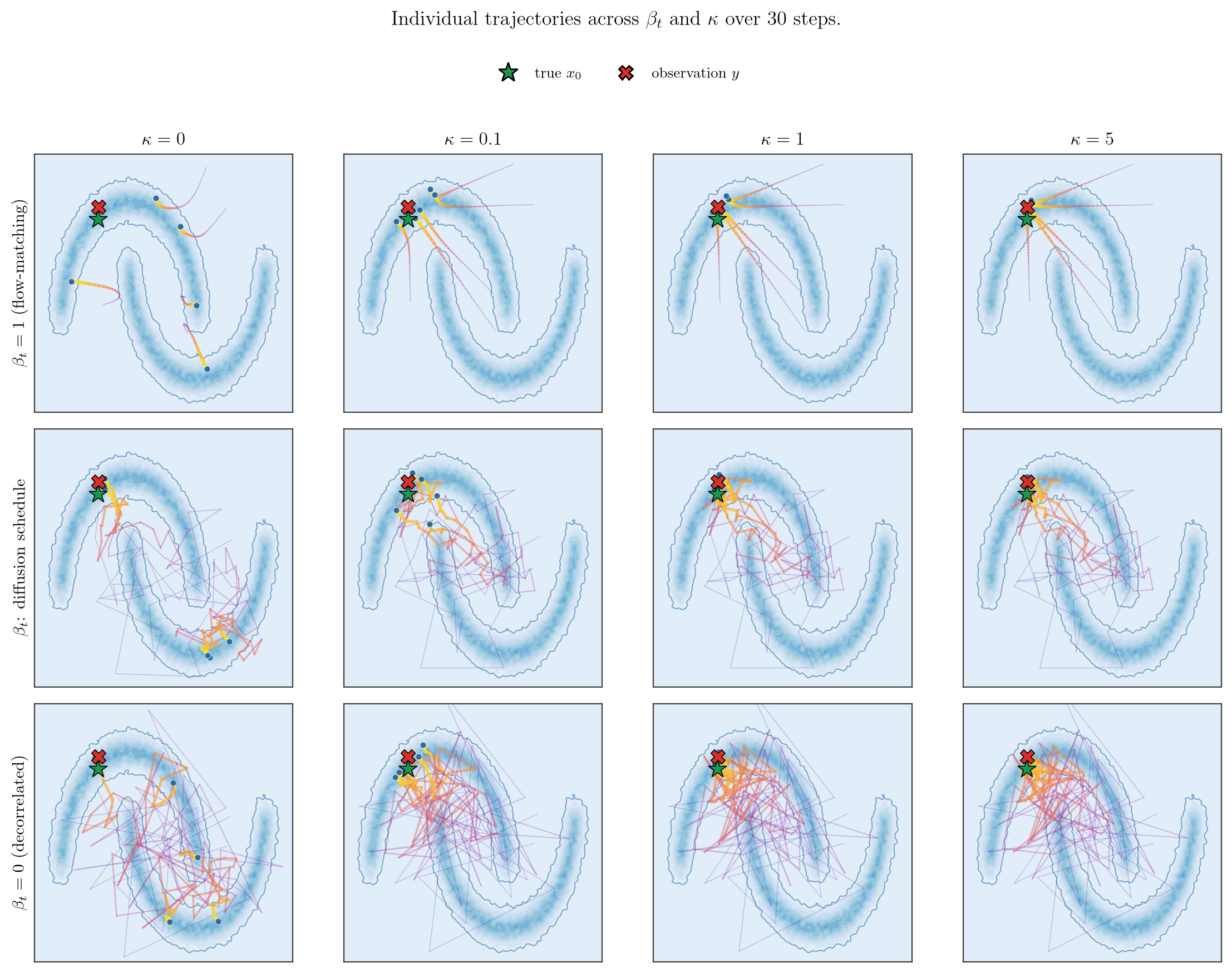}
\caption{Trajectories for a two-moons denoising problem under for trajectory priors in $\beta\in\{1,\beta_t^*,0\}$ and endpoint-uncertainty scale $\kappa \in \{0,0.1,1,5\}$. For $\kappa=0$ the Kalman gain vanishes, so the particles follow the unconditional prior unaffected by the observations. For $\kappa>0$ the prior weakens and lets the Kalman gain pulls the particles towards observation with the magnitude of the pull determined by $\kappa$. Trajectories in the top row are deterministic given the initial noise and observations, as expected of an ODE update, whereas those in the lower two rows take random transitions at each step.}
\label{fig:toy-beta-kappa}
\end{figure}

\begin{figure}[H]
\centering
\includegraphics[width=\textwidth]{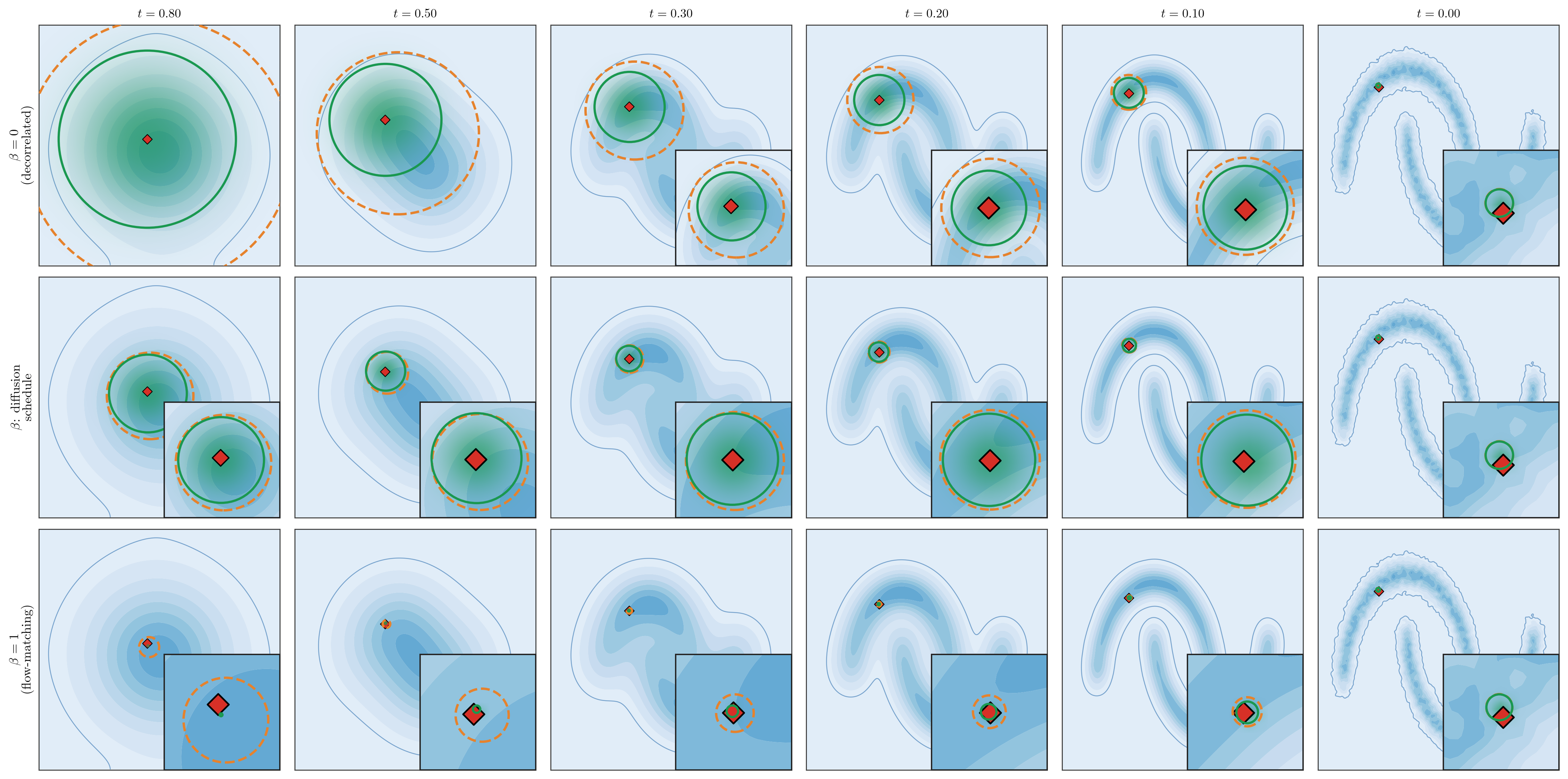}
\caption{The conditioning step at six times along a trajectory (columns)
for three choices of $\beta_t$ (rows): the decorrelated kernel
($\beta_t=0$), the diffusion schedule ($\beta_t=\beta^\star_t$) and the
flow-matching kernel ($\beta_t=1$). Each panel shows the prior over
$x_{t_i}$ before conditioning on $y$ (dashed orange) and the posterior
\eqref{eq:full-update} (solid green), together with the true signal (red
diamond) mapped into current time point (i.e. the red diamonds are showing $\alpha_t y$ for the time associated to that column, as this is what the time conditional posterior is targeting). Larger $\beta_t$ leaves less fresh noise $\tau^2$ in the kernel, so both prior and posterior contract, and at $\beta_t=1$ the conditioning step makes only a small
correction to a nearly deterministic update.}
\label{fig:toy-beta-comparison}
\end{figure}

\section{Implementation Details}
We base our implementation on the github repository for Flower \url{https://github.com/mehrsapo/Flower} which is a fork of the PnP-Flow repository \url{https://github.com/annegnx/PnP-Flow}. We inherit the model architecture, weights, and datasets for the AFHQ-Cats and CelebA from these works. For the FastMRI experiment we trained a model using the same architecture, and for the AAPM model we used the pre-trained model from \cite{denker2026stability}. 

A fully reproducible implementation with pre-trained weights and dataset splits is planned for released in the near future. The following subsections describe the problem-specific implementation details. 

\subsection{AFHQ-Cats and CelebA}
We inherit the experimental setup, pre-trained model weights, datasets/splits, and baseline hyperparameters directly from \cite{pourya2026flower}, which in turn builds on \cite{martin2025pnpflow}.

We evaluate on two datasets. CelebA consists of $128\times128$ human face images; AFHQ-Cat consists of $256\times256$ cat images. All images are normalised to the range $[-1,1]$. Models are trained on the full training splits of both
datasets. 
All metrics are reported on 100 test images per dataset using a consistent split with the comparison methods.

All flow-matching methods (including $\beta$-Flow and the baselines OT-ODE, D-Flow, Flow-Priors, and PnP-Flow) share the same U-Net backbone trained with mini-batch optimal-transport flow matching and a Gaussian latent prior. Pre-trained weights are taken from \cite{martin2025pnpflow}, trained
with learning rate $10^{-4}$: on CelebA for 200 epochs (batch size
128) and on AFHQ-Cat for 400 epochs (batch size 64).

We evaluate on five restoration tasks: \emph{denoising} with additive Gaussian noise ($\sigma_n=0.2$); \emph{Gaussian deblurring} with a $61\times61$ kernel ($\sigma_b=1.0$ for CelebA, $\sigma_b=3.0$ for AFHQ-Cat) and additive noise ($\sigma_n=0.05$); \emph{super-resolution} at $2\times$ for CelebA and $4\times$ for AFHQ-Cat ($\sigma_n=0.05$); \emph{random inpainting} with 70\% of pixels removed ($\sigma_n=0.01$); and \emph{box inpainting} with a centred rectangular mask ($40\times40$ for CelebA, $80\times80$ for AFHQ-Cat, $\sigma_n=0.05$).

We compare against OT-ODE, D-Flow, Flow-Priors, PnP-Flow, PnP-GS, and DiffPIR. Hyperparameters for all baselines
are adopted from \cite{pourya2026flower}, which performed a grid search
on the validation set to maximise PSNR. The number of steps was fixed
to 100 for all methods; the Flower and PnP-Flow results that used
$200^\dag$ or $500^*$ steps on AFHQ-Cat are quoted directly from
\cite{pourya2026flower} and are indicated by the corresponding
superscripts in Table~\ref{tab:afhq}.

For $\beta$-Flow, $\beta_t$ was selected from a uniform grid on $[-0.9, 0.9]$ at spacing $0.1$, augmented with $\pm0.95$ and $\pm0.99$ near the boundaries, and $\kappa$ from $\{0.1,0.5,1,5,10\}$, both tuned independently per dataset and task on the held-out validation split. The resulting values are reported in Table~\ref{tab:hparams_celeba_afhq}.

\begin{table}[h]
\centering
\small
\setlength{\tabcolsep}{4pt}
\caption{$\beta$-Flow hyperparameters ($\beta_t$, $\kappa$) per task,
selected on the validation split to maximise PSNR.}
\label{tab:hparams_celeba_afhq}
\begin{tabular}{@{}lcccc@{}}
\toprule
& \multicolumn{2}{c}{CelebA} & \multicolumn{2}{c}{AFHQ-Cat} \\
\cmidrule(lr){2-3}\cmidrule(lr){4-5}
Task & $\beta_t$ & $\kappa$ & $\beta_t$ & $\kappa$ \\
\midrule
Denoising         &  $0.1$ & $1.0$ &  $0.4$ & $1.0$ \\
Deblurring        & $-0.99$ & $1.0$ & $-0.8$ & $5.0$ \\
Super-resolution  & $-0.9$ & $1.0$ &  $0.5$ & $0.1$ \\
Random inpainting & $-0.9$ & $10.0$ & $-0.9$ & $10.0$ \\
Box inpainting    &  $0.9$ & $0.1$ &  $0.8$ & $0.1$ \\
\bottomrule
\end{tabular}
\end{table}

\subsection{Medical MRI/CT reconstruction}

Every method was tuned on a separate validation set, and the number of steps was fixed to 100 (except for D-Flow). Per-image runtimes for both medical tasks are reported in Table~\ref{tab:mri_ct_runtime}.

For the fastMRI dataset, images are $128\times128$, complex-valued (2 real channels). The forward operator retains $20\%$ of phase-encode lines (yielding an effective $R{=}5$ acceleration), with Gaussian noise $\sigma=0.05$ added directly in k-space. DPS used $\zeta=1.0$, $\Pi$GDM used a guidance scale of 0.5, with the conjugate-gradient solver run for up to 100 iterations (tolerance $10^{-5}$), PnP-Flow used 5 sample averaging and $\rho=1.5$, kept constant throughout, Flower was run with 5 sample averaging (Flower5), D-Flow used an initialization blend of 0.7, a regularization weight of $10^{-8}$, 15 outer LBFGS iterations with 20 inner iterations each, and a 6-step Euler discretization of the flow ODE, $\beta$-Flow used $\beta=-0.9$ and $\kappa=1.0$ and also used 5 sample averaging.

For the CT dataset, images are $256\times256$, single-channel. The forward operator is a sparse-view Radon transform with 60 projection angles over $[0^\circ,180^\circ)$, with Gaussian noise $\sigma=0.01$ on the normalized sinogram. DPS used $\zeta=5.0$, $\Pi$GDM used a guidance scale of 1.5, with the same conjugate-gradient settings as above, PnP-Flow used 5 sample averaging and $\rho=2.0$, kept constant throughout, Flower was again run with 5 sample averaging, D-Flow used an initialization blend of 0.8, with the same remaining settings as on MRI, $\beta$-Flow used $\beta=-0.7$ and $\kappa=5$ and also used 5 sample averaging.

\begin{table}[H]
\centering
\small
\setlength{\tabcolsep}{4pt}
\caption{Wall-clock time per image, in seconds, for the MRI and CT
results of Table~\ref{tab:mri_ct_results}, under the same fixed
step budget of 100 used there (except D-Flow, which uses its own
optimisation budget).}
\label{tab:mri_ct_runtime}
\begin{tabular}{l cc}
\toprule
Method & MRI & CT \\
\midrule
DPS                 & 8.92   & 7.43   \\
$\Pi$GDM            & 9.02   & 21.75  \\
D-Flow              & 312.52 & 283.87 \\
PnP-Flow            & 15.64  & 15.95  \\
Flower          & 23.48  & 48.76  \\
$\beta$-Flow (ours) & 6.67   & 27.10  \\
\bottomrule
\end{tabular}
\end{table}

\section{$\beta_t$ and $\kappa$ Ablation Visualisation}

\begin{figure}[H]
\centering
\includegraphics[width=\linewidth]{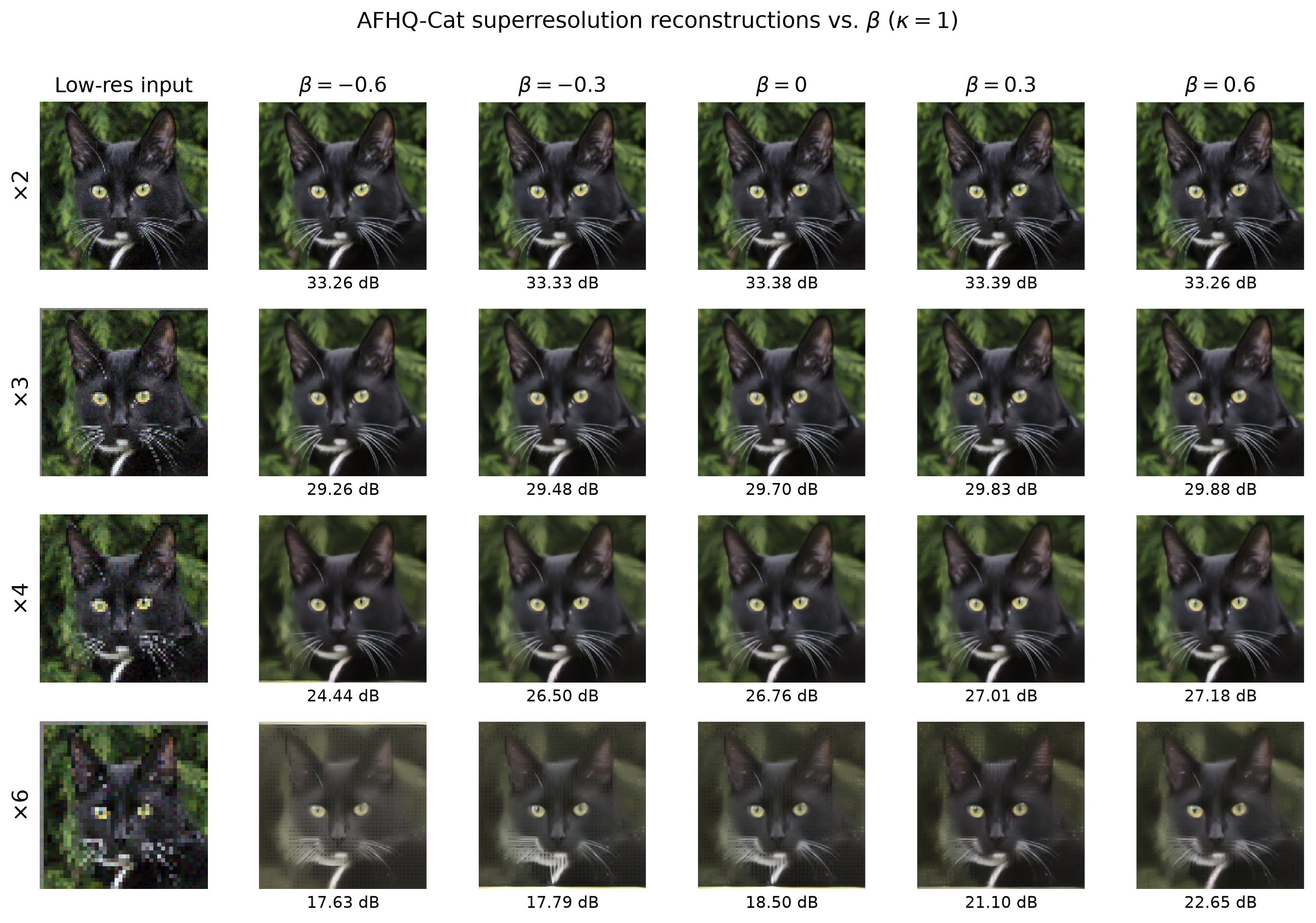}
\caption{Qualitative effect of $\beta_t$ on AFHQ-Cat super-resolution with fixed $\kappa=1$. Leftmost column is the observation, and Columns 2-6 show reconstructions with different $\beta_t=\beta$ parameters. As the downsampling factor increases the observation constrains $x_0$ less, and reconstructions benefit more from increased $\beta_t$ provides. At $\times4$ and $\times6$, $\beta_t=-0.6$ have visible grid artefacts and washed out tone that improve when $\beta_t$ increases, demonstrating the trend in Table~\ref{tab:afhq_sr_beta}.}
\label{fig:afhq_sr_beta_qual}
\end{figure}

\begin{figure}[H]
\centering
\includegraphics[width=\linewidth]{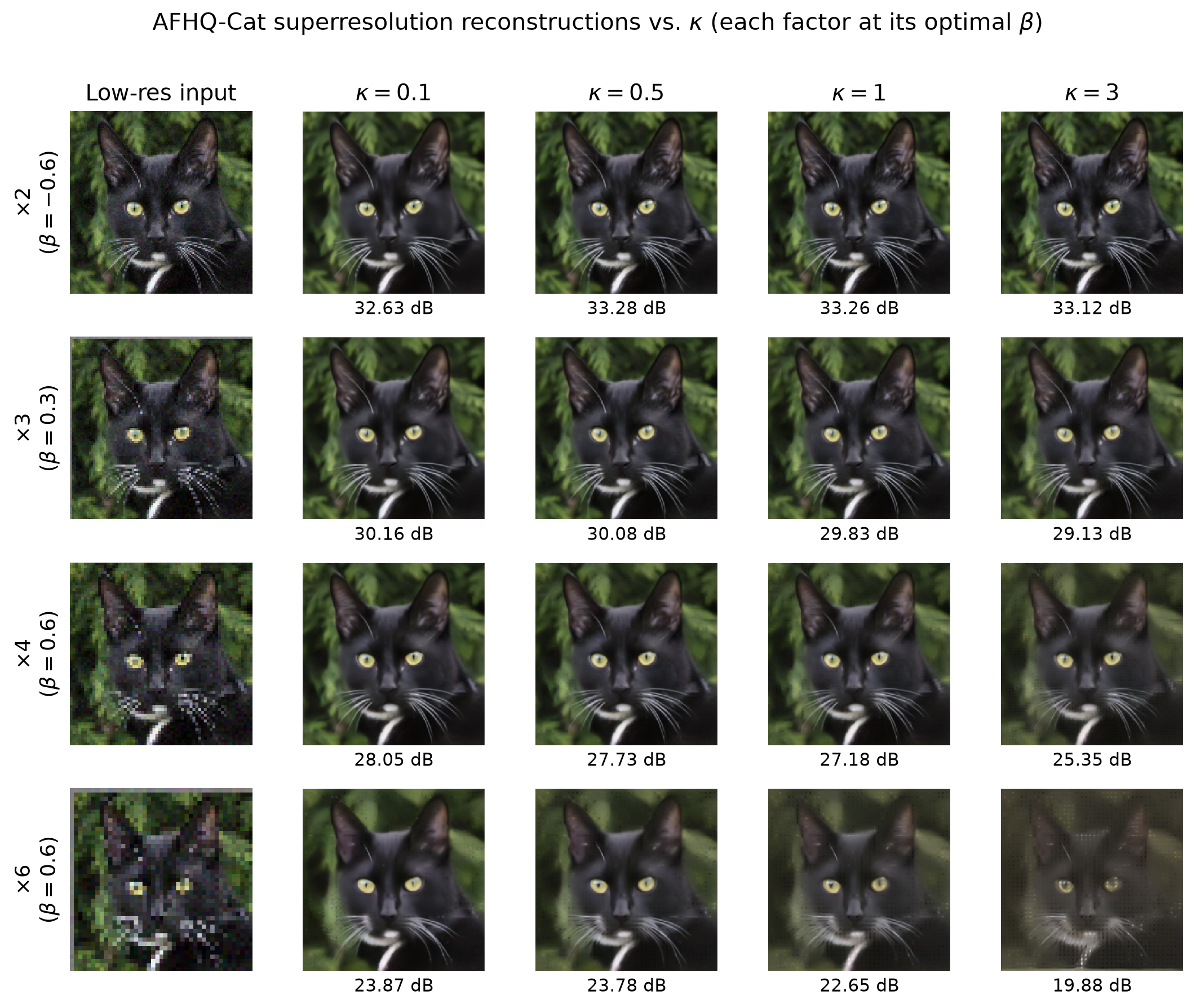}
\caption{Qualitative effect of $\kappa$ on AFHQ-Cat super-resolution using the same test image as Figure~\ref{fig:afhq_sr_beta_qual}. For each scale factor $\beta_t$ is chosen to be its own PSNR-optimal $\beta_t$ ($\times2$: $-0.6$, $\times3$: $0.3$, $\times4,\times6$: $0.6$) from Figure \ref{fig:afhq_sr_beta_qual}. A larger $\kappa$ widens the prior and lets the observation dominate the update; by $\kappa=3$ this produces blurring and a strong grid artefact, most pronounced at $\times4$ and $\times6$ where the observation is least informative, matching the trend in Table~\ref{tab:afhq_sr_kappa}.}
\label{fig:afhq_sr_kappa_qual}
\end{figure}

\end{document}